\definecolor{rowgray}{gray}{0.92}   
\newcolumntype{L}{>{\raggedright\arraybackslash}X}
\newcolumntype{C}{>{\centering\arraybackslash}X}
\theoremstyle{plain}
\newtheorem{theorem}{Theorem}
\newtheorem{lemma}[theorem]{Lemma}
\newtheorem{definition}[theorem]{Definition}
\theoremstyle{remark}
\newtheorem{assumption}[theorem]{Assumption}
\title{Geometry-Aware Backdoor Attacks: \\ Leveraging Curvature in Hyperbolic Embeddings}
\author{%
  Ali Baheri\\
  Department of Mechanical Engineering\\
  Rochester Institute of Technology\\
  Rochester, NY 14623\\
  \texttt{akbeme@rit.edu} \\
}
\begin{document}

\maketitle

\begin{abstract}
Non-Euclidean foundation models increasingly place representations in curved spaces such as hyperbolic geometry. We show that this geometry creates a boundary-driven asymmetry that backdoor triggers can exploit. Near the boundary, small input changes appear subtle to standard input-space detectors but produce disproportionately large shifts in the model’s representation space. Our analysis formalizes this effect and also reveals a limitation for defenses: methods that act by pulling points inward along the radius can suppress such triggers, but only by sacrificing useful model sensitivity in that same direction. Building on these insights, we propose a simple geometry-adaptive trigger and evaluate it across tasks and architectures. Empirically, attack success increases toward the boundary, whereas conventional detectors weaken, mirroring the theoretical trends. Together, these results surface a geometry-specific vulnerability in non-Euclidean models and offer analysis-backed guidance for designing and understanding the limits of defenses.

\end{abstract}

\begin{figure}[t]
\centering
\begin{tikzpicture}[scale=1.0]
    \tikzset{
        trigger/.style={fill=orange, circle, inner sep=2pt},
        clean/.style={fill=blue!60, circle, inner sep=3pt},
        backdoor/.style={fill=red!80, circle, inner sep=3pt},
        detection/.style={draw=green!70, thick, dashed}
    }
    
    \begin{scope}
        \fill[gray!10] (0,0) rectangle (5,5);
        \draw[thick] (0,0) rectangle (5,5);
        \node at (2.5,5.5) {\textbf{Euclidean Neural Network}};
        
        \foreach \x in {0.5,1,...,4.5} {
            \draw[gray!30, thin] (\x,0) -- (\x,5);
        }
        \foreach \y in {0.5,1,...,4.5} {
            \draw[gray!30, thin] (0,\y) -- (5,\y);
        }
        
        \foreach \pos/\x/\y in {1/1.5/1.5, 2/3.5/1.5, 3/2.5/3.5} {
            \node[clean] (e\pos) at (\x,\y) {};
            \node[trigger] at (\x-0.15,\y-0.15) {};
            \draw[->, thick, black] (\x,\y) -- (\x+0.4,\y+0.4);
            \node[blue!60, circle, inner sep=3pt, fill=blue!30] at (\x+0.4,\y+0.4) {};
            \draw[detection] (\x,\y) circle (0.6);
        }
        
        \node[align=center, font=\small] at (2.5,0.5) {
            \textcolor{gray}{Low success}\\
            \textcolor{blue!70}{High detection}
        };
        
        \node[align=center, font=\footnotesize] at (2.5,-0.5) {Euclidean geometry: trigger effect is \\ roughly uniform across the space};
    \end{scope}
    
    \begin{scope}[xshift=7cm]
        \foreach \r in {2.5,2.3,2.1,1.9,1.7,1.5,1.3,1.1,0.9,0.7,0.5,0.3} {
            \pgfmathsetmacro\opacity{100-\r*30}
            \fill[red!\opacity!blue!20] (2.5,2.5) circle (\r);
        }
        \draw[thick] (2.5,2.5) circle (2.5);
        \node at (2.5,5.5) {\textbf{Hyperbolic Neural Network}};
        
        \node[clean] (h1) at (2.5,2.5) {};
        \node[trigger] at (2.35,2.35) {};
        \draw[->, thick, black!30] (2.5,2.5) -- (2.7,2.7);
        \node[blue!60, circle, inner sep=3pt, fill=blue!30] at (2.7,2.7) {};
        \draw[detection] (2.5,2.5) circle (0.35);
        
        \node[clean] (h2) at (3.5,3.5) {};
        \node[trigger] at (3.35,3.35) {};
        \draw[->, thick, black!60] (3.5,3.5) -- (3.8,3.8);
        \node[red!40, circle, inner sep=3pt] at (3.8,3.8) {};
        \draw[detection, green!50] (3.5,3.5) circle (0.22);
        
        \foreach \angle in {45, 135, 225, 315} {
            \pgfmathsetmacro\px{2.5+2.0*cos(\angle)}
            \pgfmathsetmacro\py{2.5+2.0*sin(\angle)}
            \pgfmathsetmacro\ex{2.5+2.35*cos(\angle)}
            \pgfmathsetmacro\ey{2.5+2.35*sin(\angle)}
            
            \node[clean] at (\px,\py) {};
            \node[trigger] at (\px-0.08,\py-0.08) {};
            \draw[->, ultra thick, black] (\px,\py) -- (\ex,\ey);
            \node[backdoor] at (\ex,\ey) {};
            \draw[detection, green!20] (\px,\py) circle (0.1);
        }
        
        \node[align=center, font=\small] at (2.5,0.5) {
            \textcolor{purple}{High success}\\
            \textcolor{gray}{Low detection}
        };
        
        \node[font=\footnotesize] at (2.5,-0.5) {Hyperbolic geometry: boundary-amplified trigger effect};
    \end{scope}
    
    \draw[ultra thick, gray, ->] (5.5,2.5) -- (6.5,2.5);
    \node[above] at (6,3.09) {\footnotesize Same};
    \node[below] at (6,3.09) {\footnotesize Trigger};
    
    \begin{scope}[yshift=-2.5cm, xshift=2cm]
        \node[trigger] at (0,0) {};
        \node[right, font=\footnotesize] at (0.2,0) {Trigger};
        
        \node[clean] at (2,0) {};
        \node[right, font=\footnotesize] at (2.2,0) {Clean};
        
        \node[backdoor] at (3.8,0) {};
        \node[right, font=\footnotesize] at (4,0) {Backdoor};
        
        \draw[detection] (6,0) circle (0.15);
        \node[right, font=\footnotesize] at (6.3,0) {Detection range};
    \end{scope}
    
    \node[font=\small\bfseries] at (6,-3.3) {The Geometric Backdoor Advantage};
\end{tikzpicture}
\caption{In Euclidean space (left), a fixed input-space change produces a comparable effect throughout the domain. In hyperbolic space (right), the same small input-space change produces a much larger movement in representation space as points approach the boundary, while looking comparatively subtle to standard input-space detectors. This boundary-driven asymmetry underlies our attack design and theoretical analysis.}
\label{fig:geometric_advantage}
\end{figure}

\section{Introduction}

Hyperbolic neural networks have gained significant traction for modeling hierarchical and tree-like data structures, finding applications in recommendation systems \cite{chamberlain2019neural}, knowledge graphs \cite{chami2019hyperbolic,cao2020hypercore}, and natural language processing \cite{tifrea2018poincare}. These models leverage the unique properties of hyperbolic geometry to capture complex relationships more efficiently than their Euclidean counterparts. As hyperbolic networks transition from research prototypes to production systems in major technology companies \cite{facebook2019hyperbolic}, understanding their security properties becomes critical.

The security of traditional neural networks has been extensively studied. Backdoor attacks, where models behave normally on clean inputs but produce attacker-chosen outputs for inputs containing specific triggers, represent a particularly severe threat \cite{gu2017badnets}. Recent incidents have demonstrated that backdoor models can pass standard validation while harboring malicious functionality \cite{goldblum2022dataset}, making them especially dangerous for deployed systems. Although numerous backdoor techniques and defenses have been developed for Euclidean networks \cite{li2022backdoor}, the security implications of non-Euclidean geometry remain unexplored.

This gap is concerning for several reasons. First, hyperbolic networks are deployed in security-critical domains, including fraud detection \cite{liu2019hyperbolic}, drug discovery \cite{chen2021hyperbolic}, and social network analysis \cite{pareja2020evolvegcn}. Second, the distinctive properties of hyperbolic space, particularly the exponential growth of volume and non-uniform distance metrics, suggest that traditional security assumptions may not hold. Third, practitioners currently apply Euclidean defense mechanisms to hyperbolic models without understanding whether these approaches remain effective in non-Euclidean settings. Figure \ref{fig:geometric_advantage} illustrates this key geometric distinction: in hyperbolic space, the same input-space perturbation produces different effects depending on position, with boundary regions exhibiting amplified sensitivity.


This paper presents the study of backdoor vulnerabilities in hyperbolic neural networks. We discover that the geometric properties of hyperbolic space alter the backdoor threat. Specifically, the non-uniform curvature creates regions where triggers can be simultaneously more effective and harder to detect than in Euclidean space. Our key insight is that the varying sensitivity to perturbations across the hyperbolic manifold can be exploited to design powerful geometric backdoors that evade existing defenses. We develop a framework for backdoor attacks tailored to hyperbolic geometry, prove theoretical limits on detection and defense, and empirically demonstrate that our attacks outperform Euclidean baselines. 


\section{Related Work}

\textbf{Hyperbolic Neural Networks.} The use of hyperbolic geometry in deep learning was pioneered by \cite{nickel2017poincare}, who introduced Poincaré embeddings for hierarchical data. This was extended to full neural network architectures by \cite{ganea2018hyperbolic} with hyperbolic neural networks and \cite{chami2019hyperbolic} with hyperbolic graph convolutional networks. Recent work has explored various applications including natural language processing \cite{tifrea2018poincare}, computer vision \cite{khrulkov2020hyperbolic}, and recommendation systems \cite{chamberlain2019neural}. Although these works establish the representational benefits of hyperbolic geometry, none address security vulnerabilities.

\textbf{Backdoor Attacks in Neural Networks.} Backdoor attacks were first identified by \cite{gu2017badnets}, demonstrating that neural networks can be compromised through poisoned training data. Subsequent work explored various trigger designs including invisible perturbations \cite{chen2017targeted}, semantic triggers \cite{bagdasaryan2021blind}, and dynamic patterns \cite{salem2022dynamic}. Defense mechanisms have been proposed including activation clustering \cite{chen2018detecting}, fine-pruning \cite{liu2018fine}, and certified defenses \cite{wang2022certifiedpatch}. However, all existing work focuses exclusively on Euclidean space, leaving geometric deep learning models unexamined.

\textbf{Adversarial Robustness in Non-Euclidean Spaces.} Limited work exists on adversarial robustness for geometric deep learning. \cite{jin2020adversarial} studied adversarial attacks on graph neural networks but focused on discrete graph structures rather than continuous manifolds. \cite{huster2021riemannian} analyzed adversarial robustness through Riemannian geometry but only for Euclidean networks with geometric regularization. Most recently, \cite{liu2022adversarial} examined adversarial examples in hyperbolic space but focused on evasion attacks rather than backdoor poisoning. 

\textbf{Security of Geometric Deep Learning.} The broader security of geometric deep learning remains nascent. \cite{zugner2018adversarial} pioneered adversarial attacks on graph neural networks, while \cite{bojchevski2019certifiable} proposed certified defenses. For hyperbolic models specifically, \cite{sun2021security} provided initial security analysis but focused on privacy rather than integrity attacks. The unique challenges of non-Euclidean geometry, such as varying metric distortion and exponential volume growth, have not been leveraged for backdoor attacks until this work.

Our work differs from previous research by identifying and exploiting the intrinsic geometric properties of hyperbolic space for backdoor attacks. While previous studies either focus on Euclidean backdoors or non-backdoor attacks in hyperbolic space, we demonstrate that the conformal factor creates a natural vulnerability that makes hyperbolic networks inherently more susceptible to backdoor attacks than their Euclidean counterparts. Table \ref{ref:tbl} summarizes the key distinctions between our curvature-aware backdoor framework and prior works.

\noindent \textbf{Main Contributions.} This work makes three key contributions to the security of geometric deep learning:

\textbf{(1)} We propose a simple geometry-adaptive trigger for models that embed data in a hyperbolic space, demonstrating that curved geometry creates a boundary-driven advantage for backdoor attacks.

\textbf{(2)} We formally show that near the boundary, small input changes (i) become harder to spot for standard input space (Euclidean-Lipschitz) detectors, and (ii) induce disproportionately large movements in the representation space. We also prove that radial defenses can suppress this effect only by sacrificing useful model sensitivity along the same direction.

\textbf{(3)} We provide an attack that is easy to implement and evaluate across tasks and architectures. Empirically, attack success rises toward the boundary while conventional detectors weaken.

\begin{table}[t]
\centering
\caption{Key differences between our curvature-aware backdoor attack framework and prior works, highlighting novel contributions such as geometry-exploiting triggers and theoretical proofs of inherent vulnerabilities}
\label{tab:departures}
\resizebox{\textwidth}{!}{
\begin{tabular}{@{}l p{0.25\textwidth} p{0.2\textwidth} p{0.3\textwidth}@{}}
\toprule
\textbf{Related Work} & \textbf{Key Focus} & \textbf{Limitations} & \textbf{Our Departures} \\
\midrule
Adversarial Attacks on Hyperbolic Networks~\cite{van2024adversarial} & Curvature-exploiting adversarial attacks (FGM/PGD). & Evasion-only; no backdoors or sparsity. & First backdoor framework; adds sparse, adaptive triggers and geometry metrics. \\
\addlinespace
HyperDefender~\cite{malik2025hyperdefender} & Defenses for hyperbolic GNNs via Gromov $\delta$. & Defense-focused; GNN-limited. & Offensive backdoors for pure hyperbolic NNs; theoretical geometric proofs. \\
\addlinespace
Graph Neural Backdoor~\cite{yang2024graph} & Backdoor review for GNNs with trigger hiding. & Graph-specific; no curvature focus. & Hyperbolic adaptation; conformal scaling and boundary poisoning. \\
\addlinespace
Lipschitz Robustness in Hyperbolic NNs~\cite{li2024improving} & Lipschitz-based adversarial robustness. & Defense/analysis; no backdoors. & Attack exploitation; high ASR with low degradation. \\
\addlinespace
Backdoor Attacks on Quantum-Hybrid NNs~\cite{guo2025backdoor} & Backdoors in quantum NNs. & Quantum-specific; no geometry triggers. & Hyperbolic extension; manifold optimizations. \\
\bottomrule
\vspace{-5 mm}
\end{tabular}
}
\label{ref:tbl}
\end{table}

\section{Preliminaries}

\subsection{Hyperbolic Geometry and Neural Networks}

We work in the Poincaré ball model $\mathbb{D}^n = \{x \in \mathbb{R}^n : \|x\| < 1\}$, which represents $n$-dimensional hyperbolic space. The Riemannian metric is given by $g_x = \lambda_x^2 g^E$ where $g^E$ is the Euclidean metric and $\lambda_x = \frac{2}{1 - \|x\|^2}$ is the conformal factor.\footnote{Note. Our theoretical bounds are explicitly stated in the Euclidean margin $\delta(x)=1-\|x\|$; we do not rely on any claims \say{exponential blow-up} about the conformal factor. See Theorem 1 and the Appendix remarks.} This factor creates a non-uniform geometry: near the origin ($\|x\| \approx 0$), $\lambda_x \approx 2$ resembles Euclidean space, while approaching the boundary ($\|x\| \rightarrow 1$), $\lambda_x \rightarrow \infty$ exhibits extreme metric distortion.

The hyperbolic distance between the points $x, y \in \mathbb{D}^n$ is:
\begin{equation}
d_{\mathbb{D}}(x, y) = \text{arccosh}\left(1 + 2\frac{\|x - y\|^2}{(1 - \|x\|^2)(1 - \|y\|^2)}\right)
\end{equation}

Key operations in hyperbolic neural networks include the exponential map $\exp_x^{\mathbb{D}}(v)$ projecting tangent vectors to the manifold, and Möbius addition $x \oplus y$ for vector operations. The Riemannian gradient is $\text{grad}_x f = \lambda_x^{-2} \nabla_x f$, scaling the Euclidean gradient to respect the local geometry.

\subsection{Backdoor Attacks}

A backdoor attack on a classifier $f_\theta: \mathcal{X} \rightarrow \mathcal{Y}$ creates a modified model $f_{\theta'}$ that behaves normally on clean inputs but consistently misclassifies inputs containing a trigger pattern $\tau$ to a target class $y_t$. The attack involves poisoning a fraction of the training data by adding triggers and relabeling to $y_t$. Success is measured by the attack success rate on the triggered input while maintaining clean accuracy. The key challenge in hyperbolic space is that the non-uniform geometry creates position-dependent vulnerability to triggers, which we exploit through geometric design.

\section{Methodology}

We present a framework for backdoor attacks in hyperbolic neural networks that exploits the non-uniform geometry of hyperbolic space. Consider a neural network classifier $f_\theta: \mathbb{D}^n \rightarrow \{1, ..., C\}$ operating on the Poincaré ball. Our goal is to inject a backdoor that causes misclassification of the target class $y_t$ when a specific trigger is present, while maintaining normal behavior on clean inputs.

The key idea is that the conformal factor $\lambda_x = \frac{2}{1-\|x\|^2}$ creates regions of varying sensitivity to perturbations. Near the boundary where $\|x\| \rightarrow 1$, small Euclidean perturbations correspond to large hyperbolic displacements, enabling powerful yet hard-to-detect triggers. We design a trigger function $\tau: \mathbb{D}^n \rightarrow \mathbb{D}^n$ that adapts to local geometry:
\begin{equation}
\tau(x) = \exp_x(s(x) \cdot P_{0 \rightarrow x}(\delta))
\end{equation}
where $\delta \in T_0\mathbb{D}^n$ is a base trigger pattern in the tangent space at the origin, $P_{0 \rightarrow x}$ denotes parallel transport along the geodesic from origin to $x$, and $s(x)$ is an adaptive scaling function:
\begin{equation}
s(x) = \alpha \cdot \left(\frac{\lambda_0}{\lambda_x}\right)^\beta
\end{equation}
with base strength $\alpha > 0$ and adaptation parameter $\beta \in [0, 1]$. This scaling ensures consistent trigger effectiveness across different manifold regions by compensating for local metric distortion.

To enhance stealthiness, we impose sparsity on the trigger pattern. Let $\mathcal{S} \subseteq \{1, ..., n\}$ denote the active dimensions with $|\mathcal{S}| = k \ll n$. The sparse trigger is obtained by solving:
\begin{equation}
\delta^* = \arg\max_{\|\delta\|_0 \leq k} \mathbb{E}_{x \sim p(x)} \left[\mathcal{L}_{\text{attack}}(f_\theta(\tau_\delta(x)), y_t)\right]
\end{equation}
The optimization respects the Riemannian structure through the scaled gradient $\text{grad}_x \mathcal{L} = \lambda_x^{-2} \nabla_x \mathcal{L}$, ensuring updates follow the manifold geometry.

Given training data $\mathcal{D} = \{(x_i, y_i)\}_{i=1}^N$, we select samples for poisoning based on their geometric position. The poisoning distribution prioritizes samples where triggers are most effective:
\begin{equation}
p_{\text{poison}}(x) \propto \exp\left(-\frac{d_{\mathbb{D}}(x, \mu_c)^2}{2\sigma^2}\right) \cdot \lambda_x^\gamma
\end{equation}
where $\mu_c$ is the Fréchet mean of class $c$ samples, and $\gamma$ weights the importance of metric distortion. We poison a fraction $\rho$ of samples by replacing $(x_i, y_i)$ with $(\tau(x_i), y_t)$ according to this distribution.

The backdoored model is trained with a multi-objective loss balancing clean performance, trigger effectiveness, and geometric consistency:
\begin{equation}
\mathcal{L}_{\text{total}} = \mathcal{L}_{\text{clean}} + \lambda_1 \mathcal{L}_{\text{backdoor}} + \lambda_2 \mathcal{L}_{\text{geometric}}
\end{equation}
where the geometric regularization $\mathcal{L}_{\text{geometric}} = \mathbb{E}_{x}[\lambda_x \|\text{grad}_x f_\theta\|^2_g]$ penalizes hyperbolic gradient magnitude uniformly across the manifold (in geodesic units), discouraging sharp sensitivity anywhere including near the boundary.

\begin{algorithm}[t]
\caption{Position-adaptive trigger (Euclidean additive implementation)}
\label{alg:hyperbolic-trigger}
\begin{algorithmic}[1]
\Require $x\in\mathbb{R}^d$ with $\|x\|<1$; sparse direction $\delta$; strength $\alpha>0$; exponent $\beta\ge0$; radius $\rho=0.95$
\State \textbf{Conformal factor:} $\displaystyle \lambda_x \gets \frac{2}{1-\|x\|^2}$
\State \textbf{Adaptive scale:} $\displaystyle s(x) \gets \alpha\Big(\frac{\lambda_0}{\lambda_x}\Big)^{\beta} = \alpha(1-\|x\|^2)^{\beta}$
\State \textbf{Additive step:} $\tilde{x}\gets x + s(x)\cdot\delta$
\State \textbf{Small noise:} $\tilde{x}\gets \tilde{x} + \xi$ 
\State \textbf{Projection:} $\tau(x)\gets \Pi_\rho(\tilde{x})$ \quad (radial projection into the ball)
\State \textbf{Output:} $\tau(x)$
\end{algorithmic}
\end{algorithm}

\section{Theoretical Analysis}

We establish fundamental limits on detecting and defending against backdoor attacks in hyperbolic neural networks. Our analysis reveals that the geometric properties of hyperbolic space create an inherent advantage for attackers that cannot be mitigated without sacrificing model utility.

\begin{theorem}[Geometry-Aware Triggers in the Poincaré Ball]\label{thm:main}
Fix $x\in\mathbb{D}^n$ with $r=\|x\|$ and let $\delta(x) = \coloneqq 1-r$ denote the Euclidean margin to the boundary. For $s>0$, consider the outward radial trigger $\tau_s(x)$ obtained by following the outward radial geodesic from $x$ for hyperbolic arclength $s$.

\medskip
\noindent\textbf{(i) Exact Euclidean size for a given hyperbolic step.}
Let $\kappa(x,s) = \coloneqq \|\tau_s(x)-x\|_2$ be the Euclidean displacement induced by arclength $s$.
Then
\begin{equation}
\kappa(x,s)
=\frac{\bigl(1-r^2\bigr)\tanh\!\bigl(s/2\bigr)}{\,1+r\,\tanh\!\bigl(s/2\bigr)}.
\end{equation}
In particular,
\begin{equation}
\kappa(x,s)\ \le\ \bigl(1-r^2\bigr)\tanh\!\bigl(s/2\bigr)
\ =\ \frac{2}{\lambda_x}\,\tanh\!\bigl(s/2\bigr)
\ \le\ \frac{s}{\lambda_x},
\end{equation}
with equality in the small-$s$ limit.

\medskip
\noindent\textbf{(ii) Stealth under Euclidean-Lipschitz detectors.}
If $D$ is $L_E$-Lipschitz in the Euclidean metric, then
\begin{equation}
|D(\tau_s(x))-D(x)|\ \le\ L_E\,\kappa(x,s)\ \le\ L_E\,(1-r^2)\,\tanh\!\bigl(s/2\bigr).
\end{equation}
Consequently, for any $\delta\in(0,1)$ and any $x$ with $\|x\|\ge 1-\delta$,
\begin{equation}
\sup_{\|x\|\ge 1-\delta}\ |D(\tau_s(x))-D(x)|
\ \le\ 2 L_E\,\delta\,\tanh\!\bigl(s/2\bigr),
\end{equation}
and, for any random $X$ supported in $\{x:\|x\|\ge 1-\delta\}$,
\[
\mathbb{E}\bigl[\,|D(\tau_s(X))-D(X))|\,\bigr]\ \le\ 2 L_E\,\delta\,\tanh\!\bigl(s/2\bigr).
\]
Thus the detectability (in Euclidean-Lipschitz sense) decays \emph{linearly} in the shell width $\delta$.

\medskip
\noindent\textbf{(iii) Geodesic amplification for a fixed Euclidean budget.}
Fix $\kappa\in(0,\delta(x))$ and set $y=x+\kappa\,u_x$ (radial outward Euclidean step where $u_x = x/\|x\|$).
Then
\begin{equation}
d_g(x,y)\ =\ \ln\!\left(\frac{1+r+\kappa}{1-r-\kappa}\right)-\ln\!\left(\frac{1+r}{1-r}\right)
\ \ge\ \ln\!\left(\frac{\delta(x)}{\delta(x)-\kappa}\right)
\ \ge\ \frac{\kappa}{\delta(x)}.
\end{equation}
In particular, for $\kappa<\delta(x)$ we have
\[
\frac{d_g(x,y)}{\kappa}\ \ge\ \frac{1}{\delta(x)},
\]
so the geodesic displacement \emph{per unit Euclidean change} grows like $1/\delta(x)$ as $x$ approaches the boundary.
\end{theorem}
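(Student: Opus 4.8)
The plan is to collapse all three parts to a single one-dimensional radial computation, exploiting the fact that radial geodesics in the Poincaré ball are Euclidean straight segments through the origin. The only analytic ingredient I need is the radial distance profile: the hyperbolic distance from the origin to a point at Euclidean radius $\rho$ equals $2\operatorname{arctanh}(\rho) = \ln\frac{1+\rho}{1-\rho}$. Since $\tau_s(x)$ lies on the same outward ray as $x$, additivity of distance along a geodesic gives $s = 2\operatorname{arctanh}(r') - 2\operatorname{arctanh}(r)$, where $r' = \|\tau_s(x)\|$; equivalently $s/2 = \operatorname{arctanh}(r') - \operatorname{arctanh}(r)$. This one relation drives everything below.

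For part (i), I would apply the $\tanh$ subtraction formula to get $\tanh(s/2) = \frac{r'-r}{1-r'r}$, then solve the resulting linear-in-$r'$ equation to obtain $r' = \frac{r+\tanh(s/2)}{1+r\tanh(s/2)}$. The Euclidean displacement $\kappa(x,s) = r'-r$ then simplifies algebraically to the claimed $\frac{(1-r^2)\tanh(s/2)}{1+r\tanh(s/2)}$. The two displayed upper bounds are immediate: the denominator satisfies $1+r\tanh(s/2)\ge 1$ (since $r,\tanh(s/2)\ge 0$), and $\tanh(s/2)\le s/2$; the identity $1-r^2 = 2/\lambda_x$ rewrites the middle term, and equality in the $s\to 0$ limit follows from $\tanh(s/2)/(s/2)\to 1$ together with the denominator tending to $1$.

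Part (ii) is a direct corollary: the Lipschitz hypothesis gives $|D(\tau_s(x))-D(x)|\le L_E\,\kappa(x,s)$, which I combine with the bound from (i). On the shell $\|x\|\ge 1-\delta$ I then use $1-r^2 = (1-r)(1+r)\le \delta\cdot 2 = 2\delta$ to produce the uniform estimate $2L_E\,\delta\,\tanh(s/2)$; the supremum and the expectation bounds follow since this pointwise bound is uniform over the shell and hence integrates against any distribution supported there. For part (iii), the same radial profile with $r'=r+\kappa$ yields $d_g(x,y) = \ln\frac{1+r+\kappa}{1-r-\kappa} - \ln\frac{1+r}{1-r}$. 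I would establish the first lower bound by regrouping this into $\ln\frac{(1+r+\kappa)(1-r)}{(1-r-\kappa)(1+r)}$ and discarding the factor $\frac{1+r+\kappa}{1+r}\ge 1$, leaving $\ln\frac{\delta(x)}{\delta(x)-\kappa}$. The second lower bound is the elementary inequality $-\ln(1-u)\ge u$ applied with $u=\kappa/\delta(x)\in(0,1)$; dividing through by $\kappa$ gives the $1/\delta(x)$ amplification rate.

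There is no deep obstacle here: each piece is elementary once the radial reduction is in place, and the decomposition of the disk into diameters isometric to the hyperbolic line does the heavy lifting. The one point demanding genuine care is the direction of the inequalities in (iii), since both lower bounds are obtained by throwing away positive quantities—the boundary-helping factor $\frac{1+r+\kappa}{1+r}$ and the higher-order tail of $-\ln(1-u)$—so I must verify each discarded term has the correct sign rather than inadvertently weakening the bound the wrong way.
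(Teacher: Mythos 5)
Your proposal is correct and follows essentially the same route as the paper's Appendix~A proof: the same radial reduction (your distance-to-origin formula $2\operatorname{arctanh}(\rho)$ plus additivity is just the integrated form of the paper's ODE $dr/ds=(1-r^2)/2$), the same $\tanh$ algebra yielding $r'=\frac{r+\tanh(s/2)}{1+r\tanh(s/2)}$, the same shell estimate $1-r^2\le 2\delta$ for (ii), and the identical regrouping in (iii) that discards $\ln\frac{1+r+\kappa}{1+r}\ge 0$ before applying $-\ln(1-u)\ge u$. No gaps; the sign checks you flag in (iii) are exactly the ones the paper's proof relies on, and you have them right.
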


\begin{proof}[Proof Sketch]
Part (i) integrates the radial geodesic ODE in the Poincaré ball to obtain a closed form for the radius after arclength~$s$, yielding the exact formula and bounds for $\kappa(x,s)$.
Part (ii) is immediate from Euclidean Lipschitzness and the bounds from (i); the shell bound uses $1-r^2\le 2(1-r)\le 2\delta$.
Part (iii) computes the exact hyperbolic distance for a radial outward Euclidean move and lower-bounds it by $\ln\!\bigl(\delta/(\delta-\kappa)\bigr)\ge \kappa/\delta$ via $-\ln(1-x)\ge x$ on $(0,1)$.
\end{proof}
The full proof appears in Appendix A. 

\textbf{Summary.} Near the boundary, (i) the detectable change under the Euclidean-Lipschitz detectors shrinks linearly in the shell width $\delta$, and (ii) for any fixed Euclidean budget the geodesic displacement per unit Euclidean change grows like $1 / \delta$.


\begin{theorem}[Defense-Utility Trade-off]\label{thm:defense}
Let $P$ be an arbitrary distribution in $\mathbb{D}^n$ and $f_\theta: \mathbb{D}^n \to \mathbb{R}^C$ be a neural network classifier.
Fix $s>0$ (trigger size), $\alpha\in(0,1]$ (recovery fraction), $\beta\in(0,1]$ (success probability).
Assume:
\begin{enumerate}
\item $\mathcal{M}$ is a radial defense with $L_\Delta$-Lipschitz radial profile: it moves points inward along radial geodesics with displacement $\Delta(\rho) = 2(\rho - m(\rho))$ depending only on hyperbolic radial coordinate $\rho = \operatorname{artanh}(\|x\|)$.
\item $\mathcal{M}$ recovers $(\alpha,\beta)$-fraction of outward radial triggers of size $s$:
   \[
   \mathbb{P}\!\left[\Delta\bigl(\rho(\Phi_s(X))\bigr) \ge \alpha s\right] \ge \beta
   \]
   where $\Phi_s$ is the outward radial flow by hyperbolic arc length $s$.
\item $f_\theta$ has radial sensitivity $\mu_g > 0$: for $y$ inward of $x$ on the same radial ray,
   \[
   \|f_\theta(y) - f_\theta(x)\| \ge \mu_g \cdot d_g(x,y).
   \]
\end{enumerate}
Then with $\alpha_{\mathrm{eff}} = \alpha - L_\Delta/2$ (assumed nonnegative), we have:
\begin{align}
\mathbb{P}\!\left[d_g\!\bigl(\mathcal{M}(X),X\bigr) \ge \alpha_{\mathrm{eff}} s\right] &\ge \beta,\\
\mathbb{E}\!\left[d_g\!\bigl(\mathcal{M}(X),X\bigr)\right] &\ge \beta\,\alpha_{\mathrm{eff}} s,\\
\mathbb{E}\!\left[\|f_\theta(\mathcal{M}(X)) - f_\theta(X)\|\right] &\ge \mu_g\,\beta\,\alpha_{\mathrm{eff}} s,\\
\mathbb{E}\!\left[\|f_\theta(\mathcal{M}(X)) - f_\theta(X)\|^2\right] &\ge \beta\,\mu_g^2\,\alpha_{\mathrm{eff}}^2 s^2.
\end{align}
In words: any radial defense that (with probability $\beta$) pulls triggered inputs inward by at least an $\alpha$-fraction of the trigger size must, on clean inputs, change at least a $\beta$ fraction by at least $\alpha_{\mathrm{eff}}s$ in hyperbolic distance, causing expected output deviations that scale linearly (and in second moment, quadratically) in $s$.
\end{theorem}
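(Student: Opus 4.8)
The plan is to exploit the fact that the defense $\mathcal{M}$ is \emph{radial}: its action depends only on the hyperbolic radial coordinate, so whatever aggressive inward pull it applies to a triggered input it must, up to a Lipschitz correction, also apply to a nearby clean input on the same ray. First I would fix the coordinate bookkeeping. With $\rho = \operatorname{artanh}(\|x\|)$, the hyperbolic distance from the origin to $x$ equals $2\rho$, which accounts for the factor $2$ in $\Delta(\rho) = 2(\rho - m(\rho))$; indeed $\Delta(\rho)$ is exactly the hyperbolic length of the inward move $x \mapsto \mathcal{M}(x)$, so $d_g(\mathcal{M}(x),x) = \Delta(\rho(x))$. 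Since $\Phi_s$ is the outward radial flow by hyperbolic arc length $s$, it increases the hyperbolic radial distance $2\rho$ by $s$, hence increases $\rho$ by $s/2$; this is the origin of the factor $1/2$ in $\alpha_{\mathrm{eff}} = \alpha - L_\Delta/2$.

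The core step is the Lipschitz transfer. On the event $E = \{\Delta(\rho(\Phi_s(X))) \ge \alpha s\}$, which has probability at least $\beta$ by Assumption 2, I would write $\rho(\Phi_s(X)) = \rho(X) + s/2$ and invoke the $L_\Delta$-Lipschitz bound on the radial profile to obtain $|\Delta(\rho(X) + s/2) - \Delta(\rho(X))| \le L_\Delta \cdot s/2$. Rearranging gives $\Delta(\rho(X)) \ge \alpha s - L_\Delta s/2 = \alpha_{\mathrm{eff}} s$ on $E$. Combining this with the identity $d_g(\mathcal{M}(X),X) = \Delta(\rho(X))$ yields the first claim, $\mathbb{P}[d_g(\mathcal{M}(X),X) \ge \alpha_{\mathrm{eff}} s] \ge \beta$.

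The remaining inequalities are routine consequences. For the first-moment bound I would use nonnegativity of $d_g$ together with the high-probability event: $\mathbb{E}[d_g(\mathcal{M}(X),X)] \ge \alpha_{\mathrm{eff}} s \cdot \mathbb{P}(E) \ge \beta\,\alpha_{\mathrm{eff}} s$. The two output bounds then follow by applying the radial-sensitivity hypothesis (Assumption 3) pointwise, namely $\|f_\theta(\mathcal{M}(X)) - f_\theta(X)\| \ge \mu_g\, d_g(\mathcal{M}(X),X)$, which is valid because $\mathcal{M}(X)$ lies inward of $X$ on the same radial ray. Taking expectations gives the first-moment output bound, while squaring and restricting to $E$ (where $d_g^2 \ge \alpha_{\mathrm{eff}}^2 s^2$) gives the second-moment bound.

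I expect the only genuine obstacle to be the coordinate bookkeeping rather than any deep inequality: one must be careful that $\rho$ is \emph{half} the hyperbolic radial distance, that the flow $\Phi_s$ advances $\rho$ by $s/2$ rather than $s$, and that the constant $L_\Delta$ is measured with respect to $\rho$; a single misplaced factor of $2$ would corrupt $\alpha_{\mathrm{eff}}$. A secondary point worth verifying is that Assumption 1 genuinely licenses treating the triggered and clean points identically under $\mathcal{M}$, which it does because the two points share a radial ray and $\mathcal{M}$ ignores the angular coordinate, so only the $s/2$ gap in their $\rho$-coordinates enters the estimate.
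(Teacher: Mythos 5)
Your proposal is correct and follows essentially the same route as the paper's proof: the same two bookkeeping facts (the flow identity $\rho(\Phi_s(X))=\rho(X)+s/2$ and the reduction $d_g(\mathcal{M}(x),x)=\Delta(\rho(x))$, which the paper isolates as Lemmas in Appendix B), the same Lipschitz transfer $\Delta(\rho(X))\ge \Delta(\rho(X)+s/2)-L_\Delta s/2 \ge \alpha_{\mathrm{eff}}s$ on the recovery event, and the same derivation of the probability, expectation, and two output-moment bounds via Assumption 3. Your cautionary remarks about the factors of $2$ are exactly the points the paper's Lemmas pin down, and you resolve them correctly.
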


\begin{proof}[Proof Sketch]
Let $X$ be clean and $Z=\Phi_s(X)$ be the triggered version. By radiality, the inward displacement of $\mathcal{M}$ at any point depends only on its radial coordinate: $d_g(\mathcal{M}(\cdot),\cdot)=\Delta(\rho(\cdot))$.
The recovery assumption states that with probability $\beta$ over $X$,
\[
\Delta\bigl(\rho(Z)\bigr)=\Delta\bigl(\rho(X)+\tfrac{s}{2}\bigr) \ge \alpha s.
\]
Lipschitzness of $\Delta$ in $\rho$ implies, for the same $X$,
\[
\Delta\bigl(\rho(X)\bigr) \ge \Delta\bigl(\rho(X)+\tfrac{s}{2}\bigr) - L_\Delta\cdot \tfrac{s}{2} \ge (\alpha-\tfrac{L_\Delta}{2})s=\alpha_{\mathrm{eff}} s.
\]
Hence, the event $\{d_g(\mathcal{M}(X),X)\ge \alpha_{\mathrm{eff}}s\}$ occurs with probability at least $\beta$, giving the probability and expectation bounds.
The radial sensitivity assumption converts hyperbolic displacement into output change, yielding the first moment bound; the second moment bound follows since at least a $\beta$-fraction of the mass incurs change $\ge \mu_g\alpha_{\mathrm{eff}}s$.
The full proof appears in Appendix B.
\end{proof}

\section{Empirical Study}

\begin{figure}[b!]
  \centering
  \begin{subfigure}[b]{0.48\textwidth}
    \includegraphics[width=\textwidth]{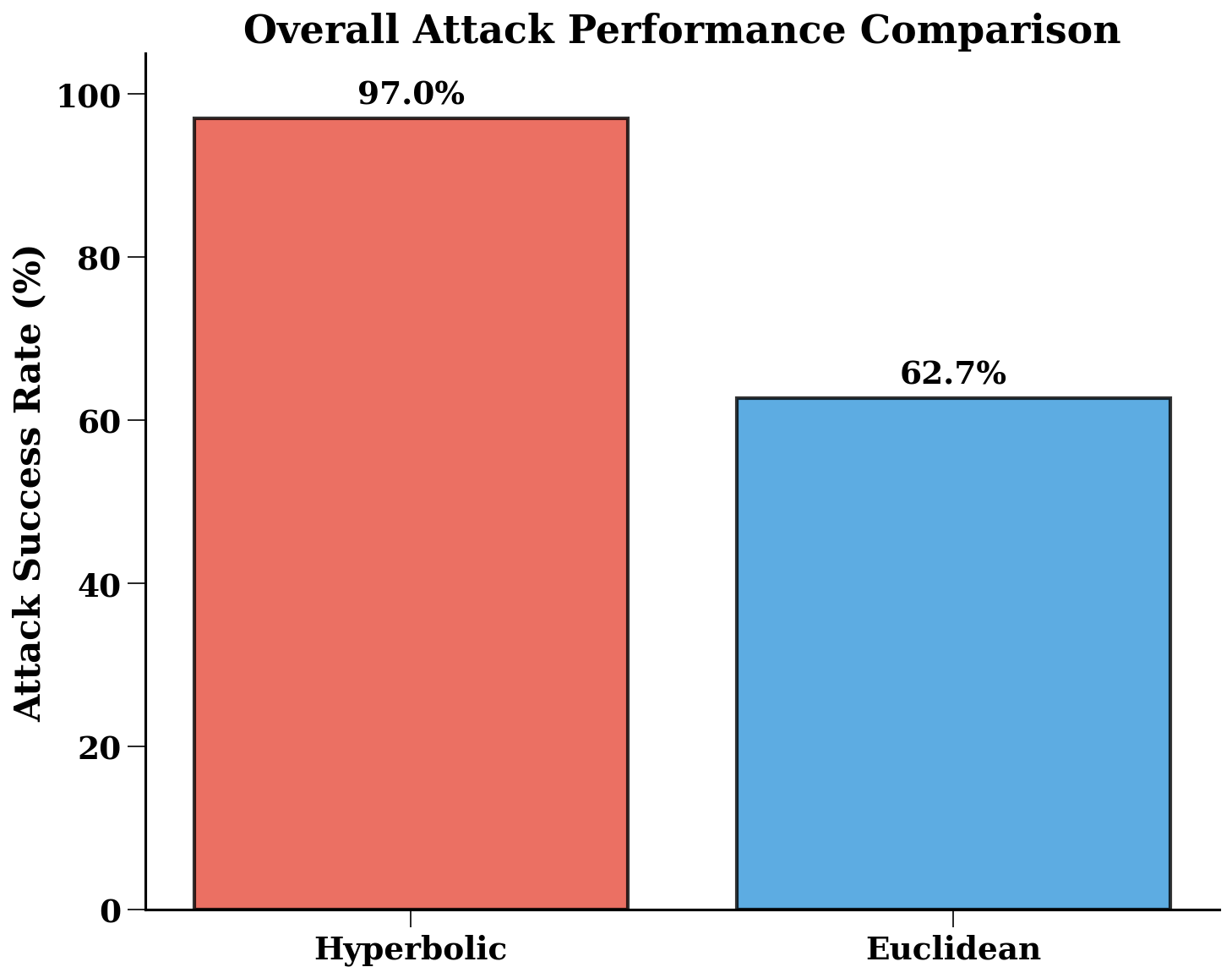}
    \caption{Attack success rates and clean accuracy}
  \end{subfigure}
  \hfill
  \begin{subfigure}[b]{0.48\textwidth}
    \includegraphics[width=\textwidth]{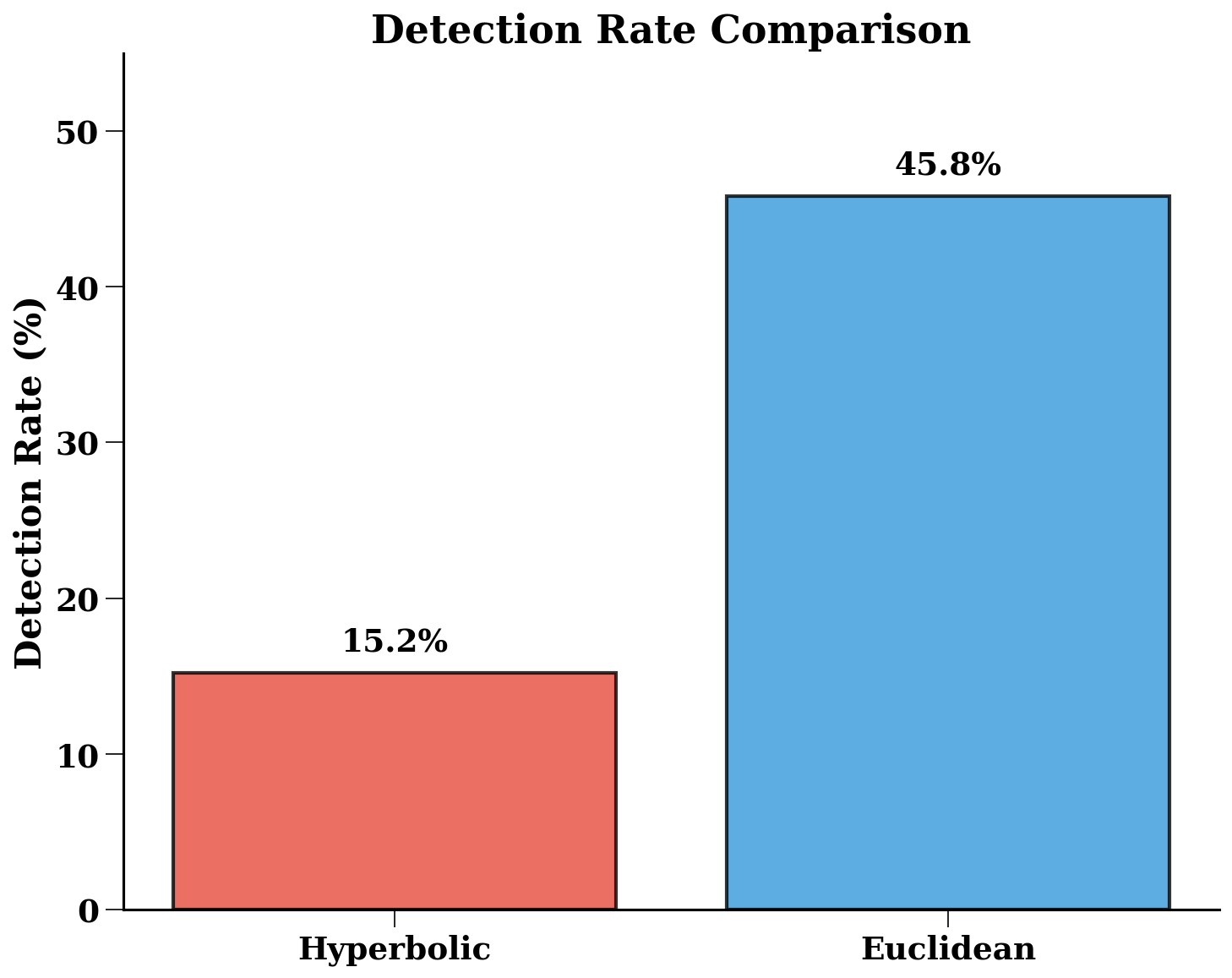}
    \caption{Detection rates for backdoor triggers}
  \end{subfigure}
  \caption{Primary experimental results comparing hyperbolic and Euclidean backdoor attacks.}
  \label{fig:results_main}
\end{figure}

\begin{figure}[t!]
  \centering
  \begin{subfigure}[b]{0.48\textwidth}
    \includegraphics[width=\textwidth]{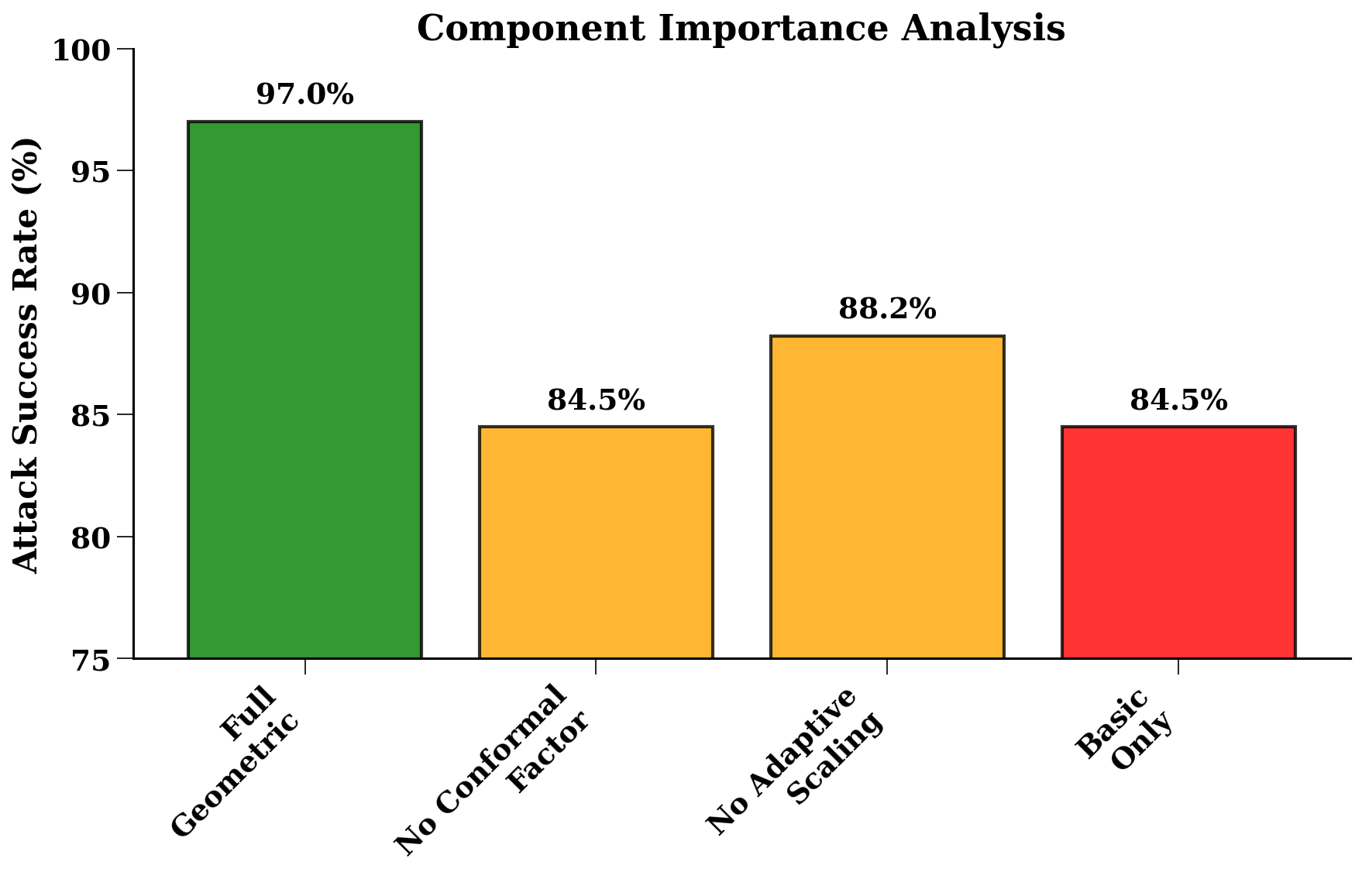}
    \caption{Ablation study on the 20newsgroups dataset, evaluating the impact of removing individual attack components.}
  \end{subfigure}
  \hfill
  \begin{subfigure}[b]{0.48\textwidth}
    \includegraphics[width=\textwidth]{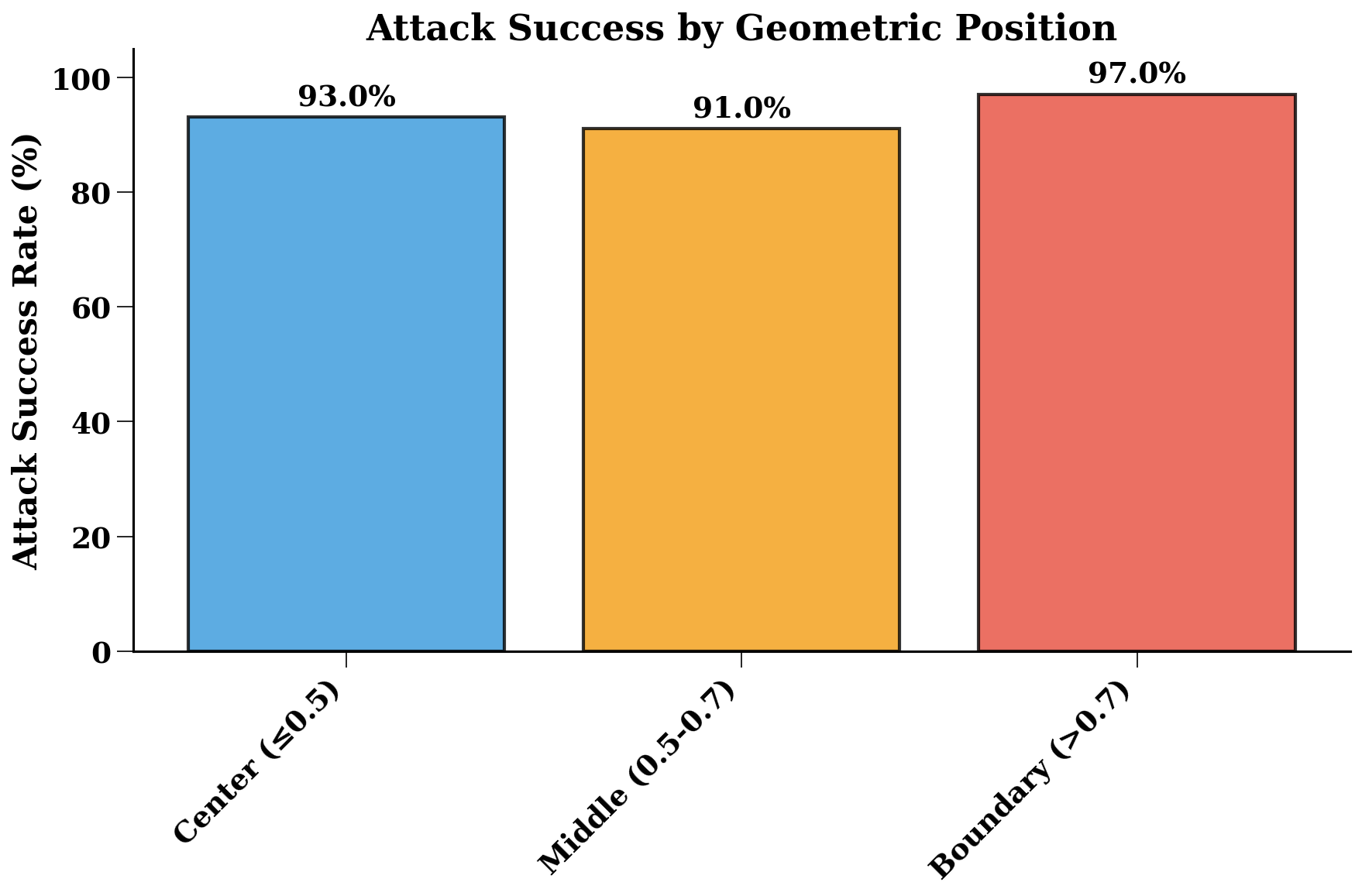}
    \caption{Attack success rate as a function of radial distance from the origin in the Poincaré ball model on the 20newsgroups dataset}
  \end{subfigure}
  \caption{Empirical analyses of the proposed curvature-aware backdoor attack on hyperbolic neural networks: (a) Ablation study assessing the contribution of key components to overall performance; (b) Variation in attack success rate based on the geometric position of data points in hyperbolic space, illustrating boundary amplification effects.}
  \label{fig:results_secondary}
\end{figure}

\noindent \textbf{Experimental Setup.} We evaluate our hyperbolic backdoor attack framework on the 20newsgroups dataset, selecting five diverse categories (\texttt{comp.graphics}, \texttt{sci.med}, \texttt{rec.sport.baseball}, \texttt{talk.politics.mideast}, \texttt{alt.atheism}) to create a balanced multi-class classification task. While these categories lack inherent hierarchical structure, they provide a controlled benchmark for evaluating geometric attack properties. Text documents are vectorized using TF-IDF (600 features, standard preprocessing), followed by truncated SVD to 50 dimensions. To simulate hyperbolic geometry while preserving semantic content, we redistribute samples radially: 50\% to radius $[0.2, 0.5]$ (center) and 50\% to $[0.5, 0.85]$ (boundary). This controlled positioning isolates geometric effects from natural clustering patterns. Our defended classifier combines a feedforward network with statistical outlier detection (Z-score and MAD, threshold $\tau = 0.13$). Training uses Adam (lr = 0.003, weight decay = $10^{-4}$) for 15 epochs with gradient clipping.


We compare two attack variants: (1) \textbf{Position-adaptive (“hyperbolic-inspired”) attack}, which applies a sparse additive trigger (30\% sparsity) with scaling that increases toward the boundary using the Poincaré conformal factor $\lambda_x$ as guidance, and selects samples by geometric position; (2) \textbf{Euclidean baseline} with uniform trigger scaling and variance-based selection. All experiments are single-target with target class fixed to 0, $\alpha = 0.35$, and a 5\% poisoning rate. We report targeted attack success rate (ASR), clean accuracy, detection-evasion rate, effects by geometric position (center/middle/boundary), and component importance via ablations, averaged over three independent trials.

\noindent \textbf{Results.} Figure~\ref{fig:results_main}(a) shows our Enhanced Hyperbolic Attack achieving 97.0\% attack success rate compared to 62.7\% for the Euclidean baseline while maintaining >95\% clean accuracy. This performance gap demonstrates the advantage of exploiting hyperbolic geometry for backdoor attacks. Figure~\ref{fig:results_main}(b) shows the detection evasion results, revealing that the hyperbolic attack achieves only 15.2\% detection rate versus 45.8\% for Euclidean. The success-detection trade-off positions our approach in the optimal high-success/low-detection region, whereas the baseline suffers from moderate success with high detectability.

Figure~\ref{fig:results_secondary}(a) presents ablation studies revealing that conformal factor scaling contributes most to attack effectiveness: removing it drops success from $97.0\%$ to $84.5\%$ ($-12.5\%$), while removing adaptive selection or sparse patterns causes similar degradation ($-8.8\%$ and $-12.5\%$). Figure~\ref{fig:results_secondary}(b) shows the geometric position analysis: attack success is highest near the boundary ($93.0\%$ center, $|x| \le 0.5$; $91.0\%$ middle, $0.5 < |x| \le 0.7$; $97.0\%$ boundary, $|x| > 0.7$), confirming that regions of higher metric distortion offer natural hiding places for triggers. This pattern is consistent with Theorem~1: for standard input-space (Euclidean-Lipschitz) detectors, the detectable change shrinks linearly with the Euclidean margin to the boundary, while the geodesic movement per fixed input change grows inversely with that margin. 




\noindent \textbf{Limitations.} While our results demonstrate clear geometric advantages, several limitations merit discussion. We use position-based scaling inspired by hyperbolic geometry rather than full Riemannian operations (parallel transport, exponential mapping), and our hybrid approach, real text content with artificial geometric redistribution, may not capture natural hyperbolic clustering patterns. Future work should explore attacks using complete geometric frameworks and evaluate on datasets with naturally occurring hyperbolic structure. Validation across multiple dimensions establishes strong evidence for security challenges in geometric deep learning and provides a foundation for developing geometry-aware defenses.

\section{Conclusion}

This work identifies a geometry-specific vulnerability in hyperbolic neural networks. Our analysis shows that near the boundary, small input changes travel disproportionately far in representation space while appearing comparatively subtle to standard input-space (Euclidean-Lipschitz) detectors, an effect we formalize with bounds that scale with the local Euclidean margin to the boundary. We further prove a limitation for radial defenses: Any method that pulls triggered inputs inward with a Lipschitz radial profile must, on clean inputs, induce changes that grow with the trigger size; that is, there is an explicit utility trade-off for this defense class. Experiments mirror these trends, that attack success rises toward the boundary while conventional detectors weaken. 

\newpage

\bibliographystyle{plainnat}
\bibliography{neurips_2025}

\newpage

\section{Appendix A: Detailed Proof of Theorem~\ref{thm:main}}

We recall that $\lambda_x = 2/(1-\|x\|^{2})$ denotes the conformal factor and write $d_g$ for the hyperbolic distance. For $x\neq 0$, write $r=\|x\|$ and the outward radial unit vector $u_x = \coloneqq x/\|x\|$.

We collect two standard facts about radial geodesics in the Poincaré ball.

\begin{lemma}[Radial geodesics and arclength]\label{lem:radial}
Let $r(s)$ denote the Euclidean radius of the outward radial unit-speed geodesic $\gamma_x$ at hyperbolic arclength $s$, with $r(0)=r$. Then
\[
\frac{dr}{ds}=\frac{1-r^2}{2},\qquad
s=\int_{r}^{r(s)} \frac{2}{1-\rho^2}\,d\rho
=\ln\!\left( \frac{1+r(s)}{1-r(s)} \right) - \ln\!\left( \frac{1+r}{1-r} \right),
\]
so in particular
\begin{equation}
r(s)=\tanh\!\Big(\operatorname{artanh}(r)+\frac{s}{2}\Big).
\end{equation}
\end{lemma}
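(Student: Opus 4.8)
The plan is to exploit the conformal structure of the Poincaré metric to reduce the statement to a one-dimensional separable ODE along the radial direction. First I would record that, because $g_x = \lambda_x^2\,g^E$ is rotationally invariant (invariant under the orthogonal group fixing the origin), each radial line through the origin coincides with the fixed-point set of the reflections across hyperplanes containing it; such fixed-point sets are totally geodesic, so the outward radial line is genuinely a geodesic and $\gamma_x$ may be written as $\gamma_x(s) = r(s)\,u_x$ for a scalar function $r(s)$ with $r(0)=r$. (Equivalently, one may invoke the standard classification of Poincaré geodesics as diameters and boundary-orthogonal circular arcs.)

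Next I would compute the arclength element along this radial path. Since $\dot\gamma_x(s) = r'(s)\,u_x$ has Euclidean norm $|r'(s)|$ and the conformal factor at radius $\rho$ is $\lambda = 2/(1-\rho^2)$, the hyperbolic speed is $\|\dot\gamma_x\|_g = \lambda_{\gamma_x}\,|r'(s)| = \frac{2}{1-r(s)^2}\,|r'(s)|$. Imposing the unit-speed (outward) condition $\|\dot\gamma_x\|_g \equiv 1$ with $r'>0$ gives immediately
\begin{equation}
\frac{dr}{ds} = \frac{1-r^2}{2},
\end{equation}
which is the first claimed identity.

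I would then separate variables, $ds = \frac{2}{1-\rho^2}\,d\rho$, and integrate from $r(0)=r$ to $r(s)$. Using the partial-fraction decomposition $\frac{2}{1-\rho^2} = \frac{1}{1+\rho}+\frac{1}{1-\rho}$, whose antiderivative is $\ln\!\bigl(\tfrac{1+\rho}{1-\rho}\bigr)=2\operatorname{artanh}(\rho)$, yields
\begin{equation}
s = \int_r^{r(s)}\frac{2}{1-\rho^2}\,d\rho = \ln\!\left(\frac{1+r(s)}{1-r(s)}\right) - \ln\!\left(\frac{1+r}{1-r}\right),
\end{equation}
the second identity. Rewriting both logarithms via $\ln\!\bigl(\tfrac{1+\rho}{1-\rho}\bigr)=2\operatorname{artanh}(\rho)$ turns this into $s = 2\operatorname{artanh}(r(s)) - 2\operatorname{artanh}(r)$; solving for $r(s)$ gives the closed form $r(s)=\tanh\!\bigl(\operatorname{artanh}(r)+\tfrac{s}{2}\bigr)$.

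The only step that is not purely mechanical is the geodesic justification in the first paragraph; everything downstream is a routine integration of a separable ODE. I would therefore concentrate the care there---either via the symmetry/totally-geodesic argument above or by verifying that the radial curve satisfies the geodesic equation $\ddot\gamma + \Gamma(\dot\gamma,\dot\gamma)=0$ for the conformal metric---and treat the ODE solution and its inversion as direct computation.
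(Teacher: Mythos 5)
Your proof is correct and follows essentially the same route as the paper's: reduce the metric along the radial curve to $ds = \frac{2}{1-r^2}\,dr$, integrate the separable ODE, and invert via $\operatorname{artanh}$. The extra paragraph justifying that radial lines are geodesics (via reflection symmetry or the classification of Poincar\'e geodesics) is a detail the paper treats as standard, and it is a fine addition rather than a departure.
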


\begin{proof}
Along a radial curve the metric reduces to $ds = \lambda(r)\,dr = \frac{2}{1-r^2}dr$, which yields the ODE and primitive directly.
Solving for $r(s)$ gives the stated formula.
\end{proof}

\begin{lemma}[Euclidean displacement under an outward radial step]\label{lem:kappa}
With notation as above, the Euclidean displacement $\kappa(x,s)=r(s)-r$ satisfies
\begin{equation}
\kappa(x,s)=\frac{(1-r^2)\tanh(s/2)}{1+r\,\tanh(s/2)}.
\end{equation}
Moreover, $\kappa(x,s)\le (1-r^2)\tanh(s/2)\le s/\lambda_x$.
\end{lemma}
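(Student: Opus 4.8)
The plan is to substitute the closed form for $r(s)$ from Lemma~\ref{lem:radial} directly into the definition $\kappa(x,s)=r(s)-r$ and simplify via the hyperbolic-tangent addition formula. First I would write $r(s)=\tanh\bigl(\operatorname{artanh}(r)+s/2\bigr)$ and apply $\tanh(a+b)=\dfrac{\tanh a+\tanh b}{1+\tanh a\,\tanh b}$ with $a=\operatorname{artanh}(r)$ (so $\tanh a=r$) and $b=s/2$. This gives
\[
r(s)=\frac{r+\tanh(s/2)}{1+r\,\tanh(s/2)}.
\]

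Next I would subtract $r$ and place everything over the common denominator $1+r\tanh(s/2)$. The numerator collapses as $r+\tanh(s/2)-r\bigl(1+r\tanh(s/2)\bigr)=\tanh(s/2)-r^{2}\tanh(s/2)=(1-r^{2})\tanh(s/2)$, which yields exactly the claimed identity $\kappa(x,s)=\dfrac{(1-r^{2})\tanh(s/2)}{1+r\,\tanh(s/2)}$.

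For the two inequalities I would argue as follows. Since $r=\|x\|\in[0,1)$ and $s>0$ forces $\tanh(s/2)>0$, the denominator satisfies $1+r\tanh(s/2)\ge 1$, so discarding it can only enlarge the fraction, giving $\kappa(x,s)\le(1-r^{2})\tanh(s/2)$. Then, substituting $1-r^{2}=2/\lambda_x$ and invoking the elementary bound $\tanh(t)\le t$ for $t\ge 0$ with $t=s/2$, I obtain $(1-r^{2})\tanh(s/2)=\dfrac{2}{\lambda_x}\tanh(s/2)\le\dfrac{2}{\lambda_x}\cdot\dfrac{s}{2}=\dfrac{s}{\lambda_x}$, completing the chain.

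The computation is entirely routine, so I do not expect a genuine obstacle; the only point requiring a moment's care is the direction of the first inequality. One must confirm that $r\ge 0$ (which holds because $r$ is a Euclidean norm) so that the denominator is at least $1$ and the bound indeed goes the stated way—were $r$ negative the inequality would reverse, but that case never arises here.
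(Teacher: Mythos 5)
Your proposal is correct and follows essentially the same route as the paper's proof: both substitute the closed form $r(s)=\tanh\bigl(\operatorname{artanh}(r)+s/2\bigr)$ from Lemma~\ref{lem:radial}, apply the $\tanh$ addition formula to get $r(s)=\frac{r+\tanh(s/2)}{1+r\,\tanh(s/2)}$, subtract $r$, and then bound via $1+r\tanh(s/2)\ge 1$ and $\tanh(s/2)\le s/2$. Your write-up merely makes the numerator simplification and the sign check on $r$ explicit, which the paper leaves implicit.
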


\begin{proof}
Using the formula for $r(s)$ from Lemma~\ref{lem:radial} and the identity
$\tanh(a+b)=\dfrac{\tanh a + \tanh b}{1+\tanh a\,\tanh b}$ with $\tanh(\operatorname{artanh} r)=r$ and $\tanh(s/2)$ as given, we obtain
\[
r(s)=\frac{r+\tanh(s/2)}{1+r\,\tanh(s/2)}.
\]
Subtracting $r$ yields the formula for $\kappa(x,s)$.
Since $\tanh(s/2)\le s/2$ and $1+r\,\tanh(s/2)\ge 1$, we get
\[
\kappa(x,s)\le (1-r^2)\tanh(s/2)\le (1-r^2)\frac{s}{2}=\frac{s}{\lambda_x}.
\]
\end{proof}

\begin{proof}[Proof of Theorem~\ref{thm:main}]
\emph{(i)} The exact formula for $\kappa(x,s)$ is given by Lemma~\ref{lem:kappa}, with the bounds in its last line.

\smallskip
\noindent\emph{(ii)}
If $D$ is $L_E$-Lipschitz in the Euclidean metric, then
\[
|D(\tau_s(x))-D(x)|\le L_E\,\|\tau_s(x)-x\|_2 = L_E\,\kappa(x,s).
\]
Applying the bounds from part (i), we get
$|D(\tau_s(x))-D(x)|\le L_E(1-r^2)\tanh(s/2)$.
If $\|x\|\ge 1-\delta$, then $1-r^2=(1-r)(1+r)\le 2(1-r)\le 2\delta$, which yields
\[
\sup_{\|x\|\ge 1-\delta}|D(\tau_s(x))-D(x)|\le 2 L_E \delta \tanh(s/2).
\]
The expectation bound follows immediately.

\smallskip
\noindent\emph{(iii)}
Let $y=x+\kappa u_x$ with $\kappa\in(0,1-r)$ so that $r'=\|y\|=r+\kappa<1$ and the segment remains inside $\mathbb{D}^n$.
The hyperbolic distance between two points on the same radial geodesic equals the radial arclength:
\[
d_g(x,y)=\int_{r}^{r'} \frac{2}{1-\rho^2}\,d\rho
= \ln\!\left(\frac{1+r'}{1-r'}\right)-\ln\!\left(\frac{1+r}{1-r}\right).
\]
Substitute $r'=r+\kappa$ and rearrange to get
\[
d_g(x,y)
= \ln\!\left(\frac{1-r}{1-r-\kappa}\right)+\ln\!\left(1+\frac{\kappa}{1+r}\right)
\ \ge\ \ln\!\left(\frac{1-r}{1-r-\kappa}\right).
\]
Finally, with $\delta=\delta(x)=1-r$ and $\kappa<\delta$, the inequality $-\ln(1-t)\ge t$ for $t\in(0,1)$ gives
\[
\ln\!\left(\frac{\delta}{\delta-\kappa}\right)
= -\ln\!\left(1-\frac{\kappa}{\delta}\right)\ \ge\ \frac{\kappa}{\delta}.
\]
Thus $d_g(x,y)\ge \kappa/\delta$, completing the proof.
\end{proof}

\paragraph{Remarks.}
(1) The stealth guarantee depends \emph{linearly} on the shell width~$\delta$ and on $\tanh(s/2)$; no appeal to ``exponential'' growth of the conformal factor is needed (indeed $\lambda_x$ grows rationally in $1-\|x\|^2$).
(2) Part~(iii) formalizes amplification without coupling the Euclidean budget $\kappa$ to $\delta$ beyond the natural feasibility condition $\kappa<\delta$; the per-unit amplification $d_g/\kappa$ scales like $1/\delta$ near the boundary.
(3) If a detector is Lipschitz in the \emph{hyperbolic} metric (say, $|D(y)-D(x)|\le L_g\,d_g(x,y)$), then for the same trigger $|D(\tau_s(x))-D(x)|\le L_g s$; i.e., hyperbolic-aware detectors do not enjoy the vanishing bound, clarifying which regularity notion matters for stealth.




\section{Appendix B: Detailed Proof of Theorem~\ref{thm:defense}}

\begin{definition}[Radial Defense]\label{def:radial-defense}
A defense $\mathcal{M}: \mathbb{D}^n \to \mathbb{D}^n$ is radial if it preserves direction and moves points inward along radial geodesics with displacement depending only on the hyperbolic radial coordinate.
\end{definition}

\begin{assumption}[Recovery Success]\label{ass:success}
The defense $\mathcal{M}$ successfully recovers an $(\alpha,\beta)$-fraction of triggered inputs.
\end{assumption}

\begin{assumption}[Radial Sensitivity]\label{ass:radial-sensitivity}
The classifier $f_\theta$ has radial sensitivity $\mu_g > 0$ along radial geodesics.
\end{assumption}

\begin{lemma}[Radial-flow identity]\label{lem:flow}
For any $x$ and $s>0$, $\rho(\Phi_s(x))=\rho(x)+\tfrac{s}{2}$ and $d_g\bigl(x,\Phi_s(x)\bigr)=s$.
\end{lemma}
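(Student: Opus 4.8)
The plan is to derive both identities directly from the closed-form radial solution in Lemma~\ref{lem:radial}, since $\Phi_s$ is by construction the unit-speed outward radial geodesic flow. First I would recall that $\rho(x)=\operatorname{artanh}(\|x\|)$ and that $\Phi_s(x)$ has Euclidean radius $r(s)=\tanh\!\bigl(\operatorname{artanh}(r)+s/2\bigr)$ with $r=\|x\|$, which is exactly the formula obtained by integrating the radial ODE $dr/ds=(1-r^2)/2$. Applying $\operatorname{artanh}$ to both sides of this identity collapses the composition $\operatorname{artanh}\circ\tanh$, giving $\rho(\Phi_s(x))=\operatorname{artanh}(r)+s/2=\rho(x)+s/2$, which is the first claim.

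For the distance identity there are two equivalent routes. The clean route is definitional: $\Phi_s$ traces a unit-speed geodesic for hyperbolic arclength $s$, and radial rays through the origin are distance-minimizing geodesics in $\mathbb{D}^n$, so the hyperbolic distance realized equals the arclength, $d_g(x,\Phi_s(x))=s$. To make this self-contained without invoking minimality, I would instead compute the distance along the ray from the metric: integrating the radial line element gives $d_g(x,\Phi_s(x))=\int_r^{r(s)}\frac{2}{1-u^2}\,du=\ln\frac{1+r(s)}{1-r(s)}-\ln\frac{1+r}{1-r}$, and then use $\ln\frac{1+t}{1-t}=2\operatorname{artanh}(t)$ to rewrite this as $2\bigl[\rho(\Phi_s(x))-\rho(x)\bigr]$. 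Substituting the first part yields $2\cdot(s/2)=s$.

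I expect no real obstacle here, as both statements are immediate corollaries of Lemma~\ref{lem:radial}. The only points worth stating explicitly are the bookkeeping on the convention $\rho=\operatorname{artanh}(\|x\|)$ — which is half the hyperbolic arclength from the origin, so that an arclength increment $s$ advances $\rho$ by $s/2$ rather than by $s$ — and, if one takes the definitional route for the distance claim, a one-line justification that the outward radial segment is the minimizing geodesic between its endpoints so that arclength coincides with $d_g$. Everything else reduces to the identity $\operatorname{artanh}\circ\tanh=\mathrm{id}$ and the logarithmic form of $\operatorname{artanh}$.
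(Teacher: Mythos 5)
Your proposal is correct and follows essentially the same route as the paper: the paper's proof likewise integrates the radial line element $ds=\tfrac{2}{1-r^2}\,dr$ to get $2\operatorname{artanh}(r(s))-2\operatorname{artanh}(r(0))=s$, which is the same computation you route through the closed form of Lemma~\ref{lem:radial}, with the same $\rho=\operatorname{artanh}(\|x\|)$ bookkeeping giving the $s/2$ increment. Your explicit remark that arclength equals $d_g$ because radial segments are minimizing geodesics (or your self-contained integral via $\ln\tfrac{1+t}{1-t}=2\operatorname{artanh}(t)$) is a slightly more careful treatment of a step the paper leaves implicit, but it is not a different argument.
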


\begin{proof}
Along a radial geodesic one has $ds=\tfrac{2}{1-r^2}\,dr$, which integrates to
$2\operatorname{artanh}(r(s)) - 2\operatorname{artanh}(r(0))=s$; hence $\rho(\Phi_s(x))=\rho(x)+s/2$ and the geodesic arclength equals $s$.
\end{proof}

\begin{lemma}[Radiality reduces to a scalar profile]\label{lem:profile}
If $\mathcal{M}$ is radial in the sense of Definition~\ref{def:radial-defense}, then for any $x$ with $\rho(x)=\rho$,
\[
d_g\bigl(\mathcal{M}(x),x\bigr)=\Delta(\rho)=2\bigl(\rho-m(\rho)\bigr),
\]
which depends only on $\rho$ and not on the direction of $x$.
\end{lemma}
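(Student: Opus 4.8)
The plan is to unpack Definition~\ref{def:radial-defense} and reduce the claim to a one-dimensional arclength computation along a single geodesic ray. First I would fix $x\neq 0$ with Euclidean radius $r=\|x\|$ and hyperbolic radial coordinate $\rho=\operatorname{artanh}(r)$, and record what radiality buys us: since $\mathcal{M}$ preserves direction, $\mathcal{M}(x)$ shares the unit direction $u_x=x/\|x\|$ with $x$; since it moves inward along the radial geodesic with displacement depending only on $\rho$, the image $\mathcal{M}(x)$ has some hyperbolic radial coordinate $m(\rho)\le\rho$ determined by $\rho$ alone. Writing $m$ for this scalar profile, both $x$ and $\mathcal{M}(x)$ lie on the Euclidean segment from the origin through $u_x$, i.e.\ on a common radial ray.

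The key step is to observe that this radial ray is a geodesic of the Poincar\'e ball, so the hyperbolic distance between any two of its points equals the arclength measured between them. I would then compute that arclength by the same integration used in Lemma~\ref{lem:radial}: along the radial direction the metric reduces to $ds=\frac{2}{1-t^2}\,dt$, so
\[
d_g\bigl(\mathcal{M}(x),x\bigr)
=\int_{\tanh(m(\rho))}^{\tanh(\rho)}\frac{2}{1-t^2}\,dt
=2\operatorname{artanh}\bigl(\tanh(\rho)\bigr)-2\operatorname{artanh}\bigl(\tanh(m(\rho))\bigr)
=2\bigl(\rho-m(\rho)\bigr).
\]
This is exactly $\Delta(\rho)$. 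Because the integrand and both limits depend on $x$ only through $\rho$, the displacement is independent of the direction $u_x$, which is the direction-independence asserted in the lemma.

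The arithmetic is routine, so the only thing requiring care is bookkeeping around the convention $\rho=\operatorname{artanh}(\|x\|)$: the factor of $2$ in $\Delta(\rho)=2(\rho-m(\rho))$ comes precisely from $\ln\frac{1+r}{1-r}=2\operatorname{artanh}(r)$, i.e.\ hyperbolic arclength along a radial ray equals twice the increment in $\rho$ (equivalently, the relation $ds=2\,d\rho$ already encoded in Lemmas~\ref{lem:radial} and~\ref{lem:flow}). I expect the main obstacle, such as it is, to be making the reduction fully rigorous rather than the computation: specifically, arguing that ``moves points inward along radial geodesics'' forces $\mathcal{M}(x)$ onto the same origin-through-$u_x$ ray (so that distance collapses to arclength) and that the inward condition gives $m(\rho)\le\rho$, which licenses the unsigned form $2(\rho-m(\rho))$. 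Once those two structural facts are fixed, the displacement formula and its direction-independence follow immediately from the radial integral above.
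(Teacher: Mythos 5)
Your proposal is correct and follows essentially the same route as the paper: the paper's own proof likewise reduces to the fact that $x$ and $\mathcal{M}(x)$ lie on a common radial geodesic and that hyperbolic arclength along a radial ray equals twice the increment in $\rho=\operatorname{artanh}(\|x\|)$ (the identity $ds=\tfrac{2}{1-r^2}\,dr$ already established in Lemma~\ref{lem:flow}), with $m(\rho)\le\rho$ removing the absolute value. You merely make the arclength integral explicit where the paper states it in one line, which is a fair elaboration rather than a different argument.
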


\begin{proof}
By definition $\mathcal{M}$ preserves direction and maps $\rho$ to $m(\rho)\le \rho$ along the same radial geodesic.
Hence $d_g(\mathcal{M}(x),x)=2|\rho-m(\rho)|=2(\rho-m(\rho))$ since $m(\rho)\le \rho$.
\end{proof}

\begin{proof}[Proof of Theorem~\ref{thm:defense}]
Let $X\sim P$ and $Z=\Phi_s(X)$. By Lemma~\ref{lem:flow}, $\rho(Z)=\rho(X)+\tfrac{s}{2}$.
By Assumption~\ref{ass:success} and Lemma~\ref{lem:profile},
\begin{equation}\label{eq:success-event}
\mathbb{P}\!\Big[\, \Delta\bigl(\rho(X)+\tfrac{s}{2}\bigr)\ \ge\ \alpha s \,\Big]\ \ge\ \beta.
\end{equation}
Since $\Delta$ is $L_\Delta$-Lipschitz in $\rho$,
\[
\Delta\bigl(\rho(X)\bigr)\ \ge\ \Delta\bigl(\rho(X)+\tfrac{s}{2}\bigr) - L_\Delta\cdot \tfrac{s}{2}.
\]
Therefore on the event in \eqref{eq:success-event} we have $\Delta(\rho(X))\ge \alpha s - \tfrac{L_\Delta}{2}s=\alpha_{\mathrm{eff}} s$.
This implies
\begin{equation}\label{eq:prob-lb}
\mathbb{P}\!\left[\, d_g\!\bigl(\mathcal{M}(X),X\bigr)=\Delta(\rho(X))\ \ge\ \alpha_{\mathrm{eff}} s \,\right]\ \ge\ \beta,
\end{equation}
which is the probability lower bound. Taking expectations and using nonnegativity gives:
\begin{equation}\label{eq:dg-exp-lb}
\mathbb{E}\bigl[d_g(\mathcal{M}(X),X)\bigr]
\ge \mathbb{E}\bigl[\, \alpha_{\mathrm{eff}} s\cdot \mathbf{1}\{\Delta(\rho(X))\ge \alpha_{\mathrm{eff}} s\}\,\bigr]
\ge \beta\,\alpha_{\mathrm{eff}} s.
\end{equation}
Next, by Assumption~\ref{ass:radial-sensitivity} (applied with $y=\mathcal{M}(X)$), for each outcome
\[
\|f_\theta(\mathcal{M}(X))-f_\theta(X)\|\ \ge\ \mu_g\, d_g\!\bigl(\mathcal{M}(X),X\bigr).
\]
Taking expectations yields:
\begin{equation}\label{eq:f-linear}
\mathbb{E}\!\left[\|f_\theta(\mathcal{M}(X)) - f_\theta(X)\|\right] \ge \mu_g\,\beta\,\alpha_{\mathrm{eff}} s.
\end{equation}
Finally, since at least a $\beta$-fraction of the mass satisfies $\|f_\theta(\mathcal{M}(X))-f_\theta(X)\|\ge \mu_g \alpha_{\mathrm{eff}} s$ and the integrand is nonnegative elsewhere,
\begin{equation}\label{eq:f-quad}
\mathbb{E}\!\left[\|f_\theta(\mathcal{M}(X))-f_\theta(X)\|^2\right]\ \ge\ \beta\,(\mu_g \alpha_{\mathrm{eff}} s)^2,
\end{equation}
which is the second moment bound.
\end{proof}

\end{document}